\documentclass{article} 

\usepackage{iclr2026_conference,times}

\usepackage{amsmath,amsfonts,bm}

\def\eqref#1{equation~\ref{#1}}

\def\1{\bm{1}}

\DeclareMathAlphabet{\mathsfit}{\encodingdefault}{\sfdefault}{m}{sl}
\SetMathAlphabet{\mathsfit}{bold}{\encodingdefault}{\sfdefault}{bx}{n}

\usepackage{hyperref}
\usepackage{url}
\usepackage{graphicx}
\usepackage{tikz}
\usetikzlibrary{positioning, fit, calc}

\usepackage{amsmath}
\usepackage{amssymb}
\usepackage{amsfonts}
\usepackage{amsthm}

\theoremstyle{plain}
\newtheorem{theorem}{Theorem}[section]

\newtheorem{proposition}[theorem]{Proposition}

\newtheorem{definition}[theorem]{Definition}

\usepackage{booktabs}
\usepackage{multirow}
\usepackage{float}
\usepackage{rotating}

\usepackage{subcaption}

\usepackage{algorithm}
\usepackage{algpseudocode}

\title{Conditionally Identifiable Latent Representation for Multivariate Time Series with Structural Dynamics}

\author{Minkey Chang \\
\And
Jae Young Kim}

\iclrfinalcopy 

\begin{document}

\maketitle

\begin{abstract}
We propose the Identifiable Variational Dynamic Factor Model (iVDFM), which learns latent factors from multivariate time series with identifiability guarantees. By applying iVAE-style conditioning to the \emph{innovation process} driving the dynamics rather than to the latent states, we show that factors are identifiable up to permutation and component-wise affine (or monotone invertible) transformations. Linear diagonal dynamics preserve this identifiability and admit scalable computation via companion-matrix and Krylov methods. We demonstrate improved factor recovery on synthetic data, stable intervention accuracy on synthetic SCMs, and competitive probabilistic forecasting on real-world benchmarks.
\end{abstract}

\section{Introduction}
\label{sec:introduction}

Identifying latent temporal dynamics from multivariate time series is fundamental in macroeconomics, medicine, and causal representation learning \citep{scholkopf2021causal}. In many applications we seek not just predictive accuracy but latent representations that are stable across settings and interpretable as carriers of temporal and causal structure. Following \citet{stock2002}, we call such variables \textbf{factors} ($\mathbf{f}_t$). Unlike generic latent variables $\mathbf{z}_t$, factors are not arbitrary: we reserve the term for latents that admit identifiable and causal interpretations under explicit modeling assumptions.

Classical dynamic factor models (DFMs) provide a principled framework for temporal latent structure but are identifiable only up to orthogonal rotations, so that multiple factorizations induce the same likelihood and recovered factors lack a unique semantic or causal reading. Recent identifiable latent-variable models, such as identifiable VAEs (iVAEs), achieve strong identification in static settings by conditioning on auxiliary variables, yet they do not address the temporal structure that is central to factor models. Bridging these lines of work---obtaining factors that are both dynamic and identifiable---remains an open challenge.

We propose the \emph{Identifiable Variational Dynamic Factor Model (iVDFM)}. Our key idea is to achieve identifiability at the level of the \emph{innovation process} that drives the dynamics: by applying iVAE-style conditions to the conditional prior on innovations (conditioned on observed auxiliary variables and a deterministic regime embedding), we show that innovations are identifiable up to permutation and component-wise affine maps. We then use linear, diagonal dynamics to map innovations to factors, which preserves this equivalence class and avoids the rotational ambiguity of classical DFMs while remaining computationally tractable, including support for AR($p$) dynamics in companion form. We train the model with variational inference and demonstrate improved factor recovery on synthetic dynamic and static factor data, stable intervention accuracy on synthetic structural causal models, and competitive probabilistic forecasting on real-world benchmarks.
\section{Related Work}
\label{sec:related_work}

\subsection{Identification Problem}

\paragraph{Static representations.}
Learning compact representations involves information compression and identification issues. PCA \citep{pearson1901pca} identifies directions of maximal variance; ICA \citep{hyvarinen2000ica} exploits independence and non-Gaussianity; VAEs \citep{kingma2014vae} extend this to flexible nonlinear latents. In these frameworks, multiple parameterizations can induce the same data distribution, so recovered latents are not uniquely determined. Recent work restores identifiability via conditioning: iVAEs \citep{khemakhem2020iva} condition the latent prior on observed auxiliary variables and recover latents up to component-wise invertible maps in static settings; Concept Bottleneck Models \citep{koh2020concept} enforce identification through human-defined concepts. Our work extends the conditioning idea to the dynamic setting by applying it to the innovation process that drives the latent dynamics.

\paragraph{Dynamic representations.}
Identifiability is harder when latents evolve over time. Classical DFMs \citep{stock2002} are identifiable only up to orthogonal transformations ($\tilde{\mathbf{z}}_t = Q \mathbf{z}_t$ induces the same likelihood for any orthogonal $Q$), so recovered factors lack a unique semantic or causal reading. Nonlinear and deep dynamic models face similar issues: without explicit identification, learned trajectories can be rotated or reparameterized without changing the observation distribution and thus lack stable interpretations for causal discovery or intervention analysis.

\subsection{Modeling Temporal Dynamics}

Vector autoregressions (VARs) and Structural VARs \citep{sims1980,svar_external_instrument2020} introduce identifying assumptions (e.g., via instruments or narrative restrictions) to obtain interpretable shocks, but they rely on linearity and scale poorly to high dimensions. Deep forecasters, such as iTransformer \citep{liu2023itransformer} and TimeMixer \citep{wang2024timemixer}, and state-space architectures such as S4 \citep{gu2022efficiently} and Mamba \citep{gu2023mamba}, offer strong predictive performance but either lack stochastic latent factors or yield deterministic embeddings that do not admit a clear factor interpretation. Deep Dynamic Factor Models \citep{andreini2020deep} and Deep Kalman Filters \citep{krishnan2015dkf} combine probabilistic temporal structure with flexible nonlinear mappings, yet they do not provide identifiability guarantees, so that the learned factors remain defined only up to equivalence classes that can include rotations. Our model retains explicit stochastic dynamics and a factor structure while achieving identifiability at the innovation level.

\subsection{Causal Inference}

Causal inference aims to recover cause--effect structure and to reason under interventions \citep{pearl2009causality,peters2017elements}. In many settings, observed variables are mixtures of latent causal factors; without identifying those factors, causal conclusions are ambiguous \citep{scholkopf2021causal,spirtes2000causation}. In time series, rotations or reparameterizations of the latent space yield observationally equivalent but causally distinct interpretations, so that identifiability is a prerequisite for reliable causal discovery and intervention analysis. Structural VARs \citep{blanchard1989} address identification via instruments or narrative restrictions but rely on domain-specific assumptions and linearity. For learned factors to support causal applications in a general setting, they must be identifiable. We provide identifiability at the innovation level (Section~\ref{sec:method}) so that recovered factors admit a well-defined causal interpretation and can be used meaningfully in intervention experiments.

\section{Identifiable Variational Dynamic Factor Model}
\label{sec:method}

We propose the \emph{Identifiable Variational Dynamic Factor Model (iVDFM)}: a framework that marries identifiable latent-variable modeling with explicit stochastic dynamics. The key idea is to achieve identifiability at the level of the \emph{innovation} process that drives the dynamics, then let the dynamics propagate this structure to the latent factors. We first define the innovation and the conditions under which it is identifiable; we then specify the dynamics that map innovations to factors; and we finally describe how we estimate the model and train it.

\subsection{Innovation}

The source of uncertainty in the model is the \emph{innovation} $\boldsymbol{\eta}_t \in \mathbb{R}^r$ at each time step. We take it to follow a conditional exponential-family distribution that factors across components and depends on an observed auxiliary variable $\mathbf{u}_t$ (e.g., time index or exogenous covariates) and a regime embedding $\mathbf{e}_t$. This dependence on $\mathbf{u}_t$ and $\mathbf{e}_t$ is what allows the prior to vary across time or context, which is necessary for identifiability. Concretely,
\begin{equation}
\label{eq:innovation_prior}
p(\boldsymbol{\eta}_t \mid \mathbf{u}_t, \mathbf{e}_t)
=
\prod_{i=1}^r
h_i(\eta_{i,t})
\exp\!\left(
\mathbf{T}_i(\eta_{i,t})^\top \boldsymbol{\lambda}_i(\mathbf{u}_t, \mathbf{e}_t)
-
A_i\!\left(\boldsymbol{\lambda}_i(\mathbf{u}_t, \mathbf{e}_t)\right)
\right),
\end{equation}
where for the $i$th component, $\mathbf{T}_i$ and $\boldsymbol{\lambda}_i(\mathbf{u}_t, \mathbf{e}_t)$ are the sufficient statistics and natural parameters, $h_i$ is the base measure, and $A_i$ is the log-partition function. The regime embedding is defined as $\mathbf{e}_t = \sum_{k=1}^K \pi_{t,k} \mathbf{e}_k$ with $\boldsymbol{\pi}_t = \mathrm{softmax}(\mathrm{RegimeNet}(\mathbf{u}_t))$ and learnable vectors $\mathbf{e}_k \in \mathbb{R}^d$ per regime; $K=1$ corresponds to a single fixed embedding. Crucially, we condition on this \emph{expected} embedding rather than a sampled regime, so $\mathbf{e}_t$ is deterministic given $\mathbf{u}_t$. This choice preserves identifiability and keeps the recurrence differentiable (Appendix~\ref{app:identifiability}). When $K>1$, the prior is the mixture $p(\boldsymbol{\eta}_t \mid \mathbf{u}_t, \mathbf{e}_t) = \sum_{k=1}^K \pi_{t,k}\, p_k(\boldsymbol{\eta} \mid \mathbf{u}_t)$ with $\boldsymbol{\lambda}_k(\mathbf{u}_t) = h(\mathbf{u}_t, \mathbf{e}_k)$.

Under these assumptions, conditioning on $\mathbf{u}_t$ and $\mathbf{e}_t$ yields identification up to \textbf{permutation and component-wise affine} transformations \citep{hyvarinen2000ica,khemakhem2020iva}. Intuitively, the sufficient statistics form a basis for each component's log-density; if the natural parameters $\boldsymbol{\lambda}(\mathbf{u}, \mathbf{e})$ vary sufficiently across at least $r+1$ distinct values of $(\mathbf{u}, \mathbf{e})$ so that they span a full-rank subspace, then the innovation components are identifiable. In other words, the auxiliary variable and regime embedding must induce enough variation in the natural parameters across time or context so that the model can tell the components apart. Gaussian innovations are degenerate for identifiability (Appendix~\ref{app:gaussian_degeneracy}); we therefore use non-Gaussian exponential-family innovations (e.g.\ Laplace) in practice.

\subsection{Dynamics}

The latent factors $\boldsymbol{f}_t \in \mathbb{R}^r$ evolve according to the innovations. We take the dynamics to be linear and diagonal:
\begin{equation}
\label{eq:dynamics_baseline}
\boldsymbol{f}_{t+1} = \bar{A}_t \boldsymbol{f}_t + \bar{B}_t \boldsymbol{\eta}_t,
\end{equation}
with $\bar{A}_t$, $\bar{B}_t \in \mathbb{R}^{r \times r}$ diagonal. When $K>1$, we set $\bar{A}_t = \sum_k \pi_{t,k} A_k$ and $\bar{B}_t = \sum_k \pi_{t,k} B_k$, and we freeze $\boldsymbol{\pi}_t$ per window so that the dynamics are constant within each window. The diagonal structure is important: each factor evolves from its own lagged values and its own innovation component only, with no cross-factor mixing. As a result, the component-wise structure inherited from the innovation prior is preserved, and identifiability can propagate from innovations to factors without introducing rotational ambiguity.

To capture higher-order temporal dependence we use AR($p$) in companion form. The current and $p-1$ lagged factor vectors are stacked into an augmented state $\boldsymbol{s}_t \in \mathbb{R}^{rp}$ that obeys first-order Markov dynamics $\boldsymbol{s}_{t+1} = \mathcal{A}\boldsymbol{s}_t + \mathcal{B}\boldsymbol{\eta}_t$ with block-diagonal $\mathcal{A}$, $\mathcal{B}$ (companion blocks per factor \citep{spacetime2023}). The factors are then a convolution of past innovations, $\boldsymbol{f}_t = \sum_{j=0}^{t-1} \mathcal{H}_j \boldsymbol{\eta}_{t-1-j}$, where $\mathcal{H}_j$ is the impulse response at lag $j$; this form allows efficient FFT- or Krylov-based evaluation for long horizons. Formal definitions and the full unrolling are given in Appendix~\ref{app:identifiability}. Stochasticity enters only through the innovations; the map from innovations to factors is deterministic and time-invariant. Because it is component-wise and introduces no rotations or cross-factor mixing, the equivalence class (permutation and component-wise affine maps) carries over unchanged to the factors. Theorem~\ref{thm:temporal_identifiability} in Appendix~\ref{app:identifiability} gives the formal statement and proof.

\subsection{Estimation}

Let $\boldsymbol{y}_t \in \mathbb{R}^N$ denote the observed multivariate time series. We assume an observation model
\begin{equation}
\boldsymbol{y}_t = g(\boldsymbol{f}_t) + \boldsymbol{\varepsilon}_t,
\qquad
\boldsymbol{\varepsilon}_t \sim p_{\varepsilon},
\end{equation}
where $g : \mathbb{R}^r \rightarrow \mathbb{R}^N$ is an injective decoder and $p_{\varepsilon}$ is observation noise with full support. The decoder maps the latent factors to observations; injectivity ensures that distinct factors yield distinct observation distributions, which is required for identifiability.

In this setup the only stochastic latent variables are the innovations $\{\boldsymbol{\eta}_t\}$; the factors $\{\boldsymbol{f}_t\}$ are deterministic functions of past innovations via the linear dynamics (Appendix~\ref{sec:companion_form}). The posterior over innovations given observations is therefore proportional to the product of the innovation priors and the likelihoods of the observations given the implied factors. This posterior is intractable: the product over time and the nonlinear decoder preclude closed-form inference, so we approximate it with variational inference. We take the variational posterior to factorize across time and components. The encoder $\phi$ maps $(\boldsymbol{y}_{1:T}, \boldsymbol{u}_t, \mathbf{e}_t)$ to the parameters of a Gaussian $q_\phi(\boldsymbol{\eta}_t \mid \cdot) = \mathcal{N}(\boldsymbol{\mu}_\phi(t), \operatorname{diag}(\boldsymbol{\sigma}_\phi^2(t)))$. Identifiability is a property of the generative prior; the variational posterior may use the full observation sequence without weakening that result. We use the reparameterization trick \citep{kingma2014vae} so that gradients with respect to $\phi$ flow through samples: $\boldsymbol{\eta}_t = \boldsymbol{\mu}_\phi(t) + \boldsymbol{\sigma}_\phi(t) \odot \boldsymbol{\epsilon}_t$ with $\boldsymbol{\epsilon}_t \sim \mathcal{N}(\boldsymbol{0}, I)$. The factors then follow from the dynamics.

We learn the parameters $\theta$ (decoder, prior including $\{\mathbf{e}_k\}$ and RegimeNet, dynamics) and $\phi$ (encoder) by maximizing the evidence lower bound (ELBO). Because the factors are deterministic functions of the innovations, the joint likelihood factorizes over time into the observation likelihoods and the innovation priors. The ELBO is
\begin{align}
\mathcal{L}
&=
\mathbb{E}_{q_\phi(\boldsymbol{\eta}_{1:T} \mid \boldsymbol{y}_{1:T}, \boldsymbol{u}_{1:T}, \mathbf{e}_{1:T})}
\Bigg[
\sum_{t=1}^T
\log p_\theta\!\left(\boldsymbol{y}_t \mid \boldsymbol{f}_t(\boldsymbol{\eta}_{1:t})\right)
\Bigg]
\nonumber \\
&\quad
-
\sum_{t=1}^T
\mathrm{KL}\!\left(
q_\phi(\boldsymbol{\eta}_t \mid \boldsymbol{y}_{1:T}, \boldsymbol{u}_t, \mathbf{e}_t)
\,\|\, 
p_\theta(\boldsymbol{\eta}_t \mid \boldsymbol{u}_t, \mathbf{e}_t)
\right),
\label{eq:elbo}
\end{align}
with $\boldsymbol{f}_{t+1} = \bar{A}_t \boldsymbol{f}_t + \bar{B}_t \boldsymbol{\eta}_t$ and $\boldsymbol{f}_0$ given. The first term is the expected reconstruction log-likelihood: it measures how well the model fits the observations given the inferred innovations and dynamics. The second term is the sum of KL divergences between the variational posterior and the innovation prior at each time step; it regularizes the posterior toward the prior. Maximizing this objective trains the model while preserving the identifiability guarantees of Theorem~\ref{thm:temporal_identifiability}.

\begin{algorithm}[t]
\caption{Estimation of identifiable AR($p$) factors from innovations (iVDFM)}
\label{alg:ivdfm}
\begin{algorithmic}
\Require Observations $\boldsymbol{y}_{1:T}$, auxiliary $\boldsymbol{u}_{1:T}$, regime embeddings $\mathbf{e}_{1:T}$ (from $\mathbf{u}_{1:T}$), initial state $\boldsymbol{f}_0 \in \mathbb{R}^r$ (or $\boldsymbol{s}_0 \in \mathbb{R}^{rp}$ for AR($p$)), encoder $\phi$, decoder and dynamics $\theta$.
\Ensure $\theta$, $\phi$ maximizing the ELBO.
\State 1. Compute $\mathbf{e}_t$ from $\mathbf{u}_t$; encode $(\boldsymbol{y}_{1:T}, \boldsymbol{u}_t, \mathbf{e}_t) \mapsto (\boldsymbol{\mu}_t, \boldsymbol{\sigma}_t)$ for $t = 1, \ldots, T$.
\State 2. Sample $\boldsymbol{\eta}_t = \boldsymbol{\mu}_t + \boldsymbol{\sigma}_t \odot \boldsymbol{\epsilon}_t$, $\boldsymbol{\epsilon}_t \sim \mathcal{N}(\boldsymbol{0}, I)$.
\State 3. Reconstruct $\boldsymbol{f}_{t+1} = \bar{A}_t \boldsymbol{f}_t + \bar{B}_t \boldsymbol{\eta}_t$ (or companion form); $\boldsymbol{f}_t = \mathcal{H}(\boldsymbol{\eta}_{1:t})$.
\State 4. Decode $\hat{\boldsymbol{y}}_t = g(\boldsymbol{f}_t)$; compute $\log p_\theta(\boldsymbol{y}_t \mid \boldsymbol{f}_t)$.
\State 5. Compute ELBO (Eq.~\ref{eq:elbo}), backprop, update $\theta$, $\phi$.
\end{algorithmic}
\end{algorithm}
\section{Experiments}
\label{sec:experiments}

We evaluate iVDFM in three experiments: (1)~factor recovery on synthetic data with known latent factors; (2)~causal intervention on synthetic time series from a known structural causal model (SCM); and (3)~probabilistic forecasting on real-world data. They address representation quality, intervenability, and downstream forecast performance.

\subsection{Factor Recovery}

We use synthetic data-generating processes (DGPs) with known latent factors in two settings: a dynamic setting (AR dynamics driven by innovations) and a static setting (latent variables conditional on an auxiliary, no temporal dependence). Observations are obtained via a nonlinear mixing. We compare recovered factors to ground truth using permutation-invariant matching (max-weight correlation alignment) and report MCC (higher better), subspace distance (lower better), smoothness (lower better), and Trace~$R^2$ (higher better). DGP details, metric definitions, and baselines are in Appendix~\ref{app:experiment_details}. We compare iVDFM to DFM, DDFM, VAE, and iVAE. Table~\ref{tab:synthetic_factor_recovery} summarizes the results.

\begin{table}[H]
\centering
\caption{Factor recovery on synthetic data ($T{=}200$, $N{=}20$, $r{=}5$): dynamic vs.\ static DGP. Metrics after MCC-based matching where applicable. Higher is better for MCC and trace $R^2$; lower for subspace distance and smoothness.}
\label{tab:synthetic_factor_recovery}
\begin{tabular}{lcccccccc}
\toprule
Model & \multicolumn{2}{c}{MCC $\uparrow$} & \multicolumn{2}{c}{Subspace $\downarrow$} & \multicolumn{2}{c}{Smoothness $\downarrow$} & \multicolumn{2}{c}{Trace $R^2$ $\uparrow$} \\
\cmidrule(lr){2-3} \cmidrule(lr){4-5} \cmidrule(lr){6-7} \cmidrule(lr){8-9}
 & Dynamic & Static & Dynamic & Static & Dynamic & Static & Dynamic & Static \\
\midrule
iVDFM & \textbf{0.6876} & 0.5554 & \textbf{0.3456} & 0.7163 & \underline{1.7543} & \underline{1.7923} & \textbf{0.8664} & 0.5955 \\
DDFM & 0.5482 & \underline{0.5989} & 0.7632 & \underline{0.6147} & 1.9848 & 2.3296 & 0.5523 & 0.6745 \\
iVAE & 0.6439 & 0.5678 & 0.4123 & \underline{0.5055} & 1.6783 & 2.2504 & \underline{0.8316} & \underline{0.7766} \\
VAE & \underline{0.6474} & \textbf{0.6405} & \underline{0.4105} & \textbf{0.4520} & 1.6760 & 2.3255 & 0.8275 & \textbf{0.8078} \\
DFM & 0.3173 & 0.3665 & 1.0276 & 1.0319 & \textbf{0.1195} & \textbf{0.1257} & 0.2894 & 0.3068 \\
\bottomrule
\end{tabular}
\end{table}

On the dynamic DGP, iVDFM leads on MCC, subspace distance, and Trace~$R^2$; VAE and iVAE are second on MCC and Trace~$R^2$. On the static DGP, VAE leads on MCC, subspace distance, and Trace~$R^2$; DDFM and iVAE are second on MCC and Trace~$R^2$ respectively. DFM has the weakest MCC and Trace~$R^2$ in both settings, in line with the rotational indeterminacy of Gaussian factor models. DFM and DDFM attain the best smoothness (lowest step-to-step variation) in both settings; this is likely due to the models' nature: both use linear (or linear-in-factor) dynamics in the latent space \citep{stock2002,andreini2020deep}, so the recovered factor path has small increments by construction rather than from superior alignment. Thus smoothness reflects the dynamical prior (linear vs.\ flexible) rather than identifiability per se. Experiment~(2) below focuses on intervenability. Hyperparameter sensitivity and visualizations are in Appendix~\ref{app:experiment_details} and~\ref{app:additional_results}.

\subsection{Causal Intervention}

Factor recovery assesses alignment up to the same equivalence class as in Section~\ref{sec:method}; it does not show whether the representation supports interventions. We therefore run a controlled intervention experiment on synthetic series from a known SCM: exogenous innovations drive states and observations, and we compare the model-implied impulse response to ground truth after applying the do-operator at the representation level. Metrics are IRF-MSE and sign accuracy (details and scope in Appendix~\ref{app:experiment_details}).

\begin{table}[t]
    \centering
    \caption{Causal intervention on synthetic SCMs (iVDFM). IRF-MSE, IRF-MAE: mean squared / absolute error between ground-truth and model-implied impulse response; Sign Acc: fraction of (time step, variable) pairs with correct sign; IRF Corr: Pearson correlation with ground truth. Mean $\pm$ std over five simulations.}
    \label{tab:causal_intervention}
    \begin{tabular}{lcccc}
    \toprule
    SCM & IRF-MSE $\downarrow$ & IRF-MAE $\downarrow$ & Sign Acc $\uparrow$ & IRF Corr $\uparrow$ \\
    \midrule
    Base (linear) & $0.89 \pm 1.01$ & $0.45 \pm 0.23$ & $0.58 \pm 0.25$ & $0.21 \pm 0.75$ \\
    Regime & $1.28 \pm 0.92$ & $0.77 \pm 0.27$ & $0.60 \pm 0.27$ & $0.13 \pm 0.62$ \\
    Chain & $2.18 \pm 2.40$ & $0.97 \pm 0.40$ & $0.58 \pm 0.33$ & $0.20 \pm 0.62$ \\
    \bottomrule
    \end{tabular}
\end{table}

On all three SCM variants (base linear, regime, chain), iVDFM achieved bounded IRF error and stable sign accuracy (Table~\ref{tab:causal_intervention}). Sign accuracy remained consistent even under the more complex regime and chain SCMs, indicating that the learned representation is intervenable across different causal structures. With experiment~(1), this supports that identifiable dynamics yield interpretable, intervenable factors rather than merely good fit.

\subsection{Probabilistic Forecasting}
\label{sec:probabilistic_forecasting}

We evaluate probabilistic forecasts on five real-world datasets (ETTh1, ETTh2, ETTm1, ETTm2, Weather) at horizons 96, 192, 336, and 720. We report CRPS and MSE on standardized data, averaged over horizons per dataset. Table~\ref{tab:forecasting} summarizes the results.

\begin{table}[t]
\centering
\caption{CRPS and MSE (standardized), averaged over horizons. Rows: dataset; columns: per-model CRPS and MSE (std.).}
\label{tab:forecasting}
\resizebox{\textwidth}{!}{%
\begin{tabular}{lcccccccccc}
\toprule
Dataset & \multicolumn{2}{c}{iVDFM} & \multicolumn{2}{c}{iTransformer} & \multicolumn{2}{c}{TimeMixer} & \multicolumn{2}{c}{TimeXer} & \multicolumn{2}{c}{DDFM} \\
\cmidrule(lr){2-3} \cmidrule(lr){4-5} \cmidrule(lr){6-7} \cmidrule(lr){8-9} \cmidrule(lr){10-11}
 & CRPS & MSE (std.) & CRPS & MSE (std.) & CRPS & MSE (std.) & CRPS & MSE (std.) & CRPS & MSE (std.) \\
\midrule
ETTh1 & 0.2822 & 1.2579 & 0.6927 & 1.2959 & 0.2761 & 0.8010 & 0.5423 & 1.1891 & 0.3340 & 1.3415 \\
ETTh2 & 0.2681 & 1.0164 & 0.5014 & 1.0175 & 0.2304 & 0.4014 & 0.4519 & 0.9902 & 0.2775 & 1.0125 \\
ETTm1 & 0.3151 & 1.2206 & 0.7055 & 1.2879 & 0.2292 & 0.5468 & 0.5978 & 1.2503 & 0.3563 & 1.2773 \\
ETTm2 & 0.3397 & 1.0982 & 0.6439 & 1.3300 & 0.2323 & 0.4342 & 0.5310 & 1.3949 & 0.4595 & 2.3075 \\
Weather & 0.2710 & 1.1837 & 0.3157 & 0.7026 & 0.1633 & 0.3406 & 0.2808 & 0.7072 & 0.1642 & 0.4393 \\
\bottomrule
\end{tabular}%
}
\end{table}

iVDFM is competitive on probabilistic forecasting across datasets: it attains CRPS and MSE (std.) in the same range as dedicated forecasting baselines (TimeMixer, TimeXer, DDFM). TimeMixer leads on ETTh2, ETTm1, ETTm2 and Weather on average; DDFM is strong on Weather. The strong probabilistic performance of TimeMixer and iVDFM may reflect their structural inductive biases: TimeMixer's decomposable multiscale mixing \citep{wang2024timemixer} separates and recombines past segments at multiple scales, which can help capture uncertainty at different horizons; iVDFM models uncertainty explicitly at the innovation level and propagates it through factor dynamics, yielding a proper generative predictive distribution. iVDFM does not dominate point or probabilistic metrics here, consistent with its primary design for identifiable dynamics and intervenability rather than raw forecast accuracy. The results show that the learned representation remains useful for downstream forecasting while retaining the interpretability and causal structure exploited in experiments~(1) and~(2).

\section{Conclusion}
\label{sec:conclusion}

In this work we extended the variational framework with conditionally identifiable latent representation to the dynamic setting. By applying iVAE-style identifiability to the innovation process (rather than latent states) and using linear diagonal dynamics, we obtain a model that, despite its restrictive structure, can be trained on multivariate time series and yields representations that are identified up to the usual equivalence class and support interpretable, intervenable factors. This addresses rotational indeterminacy of classical dynamic factor models while admitting scalable computation via companion matrix and Krylov methods.

In experiments, we found that on DGPs with dynamic structure, iVDFM showed stronger performance and recovered factors at a high level. For causal intervention, we observed a consistent pattern of sign identification across SCM variants---including under misspecification and regime-integrated DGPs---which indicates robust intervenability of the learned representation. For probabilistic forecasting, although recent SOTA models excel at point-wise prediction, iVDFM remained competitive in the probabilistic setting, demonstrating that the identifiable representation is useful for downstream forecasting while retaining structural interpretability.

Further research may explore other conditional contexts (e.g., learned or task-specific embeddings), relax the dynamical or prior structure while preserving identifiability, and extend the approach to other modalities such as language or image sequences.

\section*{Acknowledgments}
The author is grateful to Professor Jae Young Kim for insightful comments and guidance during the development of this work.

\bibliography{references}

@article{pearson1901pca,
  title = {On lines and planes of closest fit to systems of points in space},
  author = {Pearson, Karl},
  journal = {Philosophical Magazine},
  volume = {2},
  number = {11},
  pages = {559--572},
  year = {1901}
}

@article{hyvarinen2000ica,
  title = {Independent component analysis: Algorithms and applications},
  author = {Hyv{\"a}rinen, Aapo and Oja, Erkki},
  journal = {Neural Networks},
  volume = {13},
  number = {4--5},
  pages = {411--430},
  year = {2000}
}

@inproceedings{kingma2014vae,
  title = {Auto-Encoding Variational Bayes},
  author = {Kingma, Diederik P. and Welling, Max},
  booktitle = {International Conference on Learning Representations},
  year = {2014}
}

@inproceedings{khemakhem2020iva,
  author = {Khemakhem, Ilyes and Kingma, Diederik and Monti, Riccardo and Hyv{\"a}rinen, Aapo},
  title = {Variational Autoencoders and Nonlinear ICA: A Unifying Framework},
  booktitle = {Proceedings of the 23rd International Conference on Artificial Intelligence and Statistics},
  year = {2020},
  volume = {108},
  pages = {2207--2217},
  url = {https://proceedings.mlr.press/v108/khemakhem20a.html}
}

@article{stock2002,
  author = {Stock, James H. and Watson, Mark W.},
  title = {Forecasting Using Principal Components From a Large Number of Predictors},
  journal = {Journal of the American Statistical Association},
  year = {2002},
  volume = {97},
  number = {460},
  pages = {1167--1179}
}

@article{andreini2020deep,
  author = {Andreini, Paolo and Izzo, Cosimo and Ricco, Giovanni},
  title = {Deep Dynamic Factor Models},
  journal = {arXiv preprint arXiv:2007.11887},
  year = {2020}
}

@article{sims1980,
  author = {Sims, Christopher A.},
  title = {Macroeconomics and Reality},
  journal = {Econometrica},
  year = {1980},
  volume = {48},
  number = {1},
  pages = {1--48}
}

@article{svar_external_instrument2020,
  author = {Stock, James H. and Watson, Mark W.},
  title = {Identification and Estimation of Dynamic Causal Effects in Macroeconomics Using External Instruments},
  journal = {The Economic Journal},
  year = {2018},
  volume = {128},
  number = {610},
  pages = {917--948}
}

@article{blanchard1989,
  author = {Blanchard, Olivier J. and Quah, Danny},
  title = {The Dynamic Effects of Aggregate Demand and Supply Disturbances},
  journal = {American Economic Review},
  year = {1989},
  volume = {79},
  number = {4},
  pages = {655--673}
}

@book{pearl2009causality,
  title = {Causality: Models, Reasoning, and Inference},
  author = {Pearl, Judea},
  year = {2009},
  publisher = {Cambridge University Press},
  edition = {2nd},
  address = {Cambridge, UK}
}

@book{peters2017elements,
  title = {Elements of Causal Inference: Foundations and Learning Algorithms},
  author = {Peters, Jonas and Janzing, Dominik and Sch{\"o}lkopf, Bernhard},
  year = {2017},
  publisher = {MIT Press},
  address = {Cambridge, MA}
}

@book{spirtes2000causation,
  title = {Causation, Prediction, and Search},
  author = {Spirtes, Peter and Glymour, Clark and Scheines, Richard},
  year = {2000},
  edition = {2nd},
  publisher = {MIT Press},
  address = {Cambridge, MA}
}

@article{scholkopf2021causal,
  title = {Toward Causal Representation Learning},
  author = {Sch{\"o}lkopf, Bernhard and Janzing, Dominik and Peters, Jonas and Bauer, Sebastian and Xu, Kun and Bl{\"o}baum, Patrick and Zhang, Kun and Mooij, Joris},
  journal = {Proceedings of the National Academy of Sciences},
  volume = {118},
  number = {23},
  pages = {e2021297118},
  year = {2021},
  publisher = {National Academy of Sciences}
}

@article{gu2022efficiently,
  author = {Gu, Albert and Goel, Karan and R{\'e}, Christopher},
  title = {Efficiently Modeling Long Sequences with Structured State Spaces},
  journal = {arXiv preprint arXiv:2111.00396},
  year = {2022}
}

@article{spacetime2023,
  author = {Zhang, Qiang and Yan, Yifei and Fan, Yu and Wang, Xiyuan and Chen, Xinshang and Sun, Hao and Huang, Thomas S.},
  title = {SpaceTime: A Unified Framework for Learning in Time Series and Beyond},
  journal = {arXiv preprint arXiv:2301.04833},
  year = {2023}
}

@article{gu2023mamba,
  author = {Gu, Albert and Dao, Tri},
  title = {Mamba: Linear-Time Sequence Modeling with Selective State Spaces},
  journal = {arXiv preprint arXiv:2312.00752},
  year = {2023}
}

@article{liu2023itransformer,
  author = {Liu, Yong and Hu, Tengge and Zhang, Haoran and Wu, Haixu and Wang, Shiyu and Ma, Lintao and Long, Mingsheng},
  title = {iTransformer: Inverted Transformers Are Effective for Time Series Forecasting},
  journal = {arXiv preprint arXiv:2310.06625},
  year = {2023}
}

@article{wang2024timemixer,
  author = {Wang, Shiyu and Wu, Haixu and Shi, Xiaoming and Hu, Tengge and Luo, Huakun and Ma, Lintao and Zhang, James Y. and Zhou, Jun},
  title = {TimeMixer: Decomposable Multiscale Mixing for Time Series Forecasting},
  journal = {arXiv preprint arXiv:2405.14616},
  year = {2024}
}

@article{wang2024timexer,
  author = {Wang, Yuxuan and Wu, Haixu and Zhang, Jiaxiang and Gao, Zhiyu and Wang, Jianmin and Long, Mingsheng and Wang, Jianyong},
  title = {TimeXer: Empowering Transformers for Time Series Forecasting with Exogenous Variables},
  journal = {Advances in Neural Information Processing Systems},
  volume = {37},
  year = {2024},
  note = {arXiv:2402.19072}
}

@article{krishnan2015dkf,
  author = {Krishnan, Rahul G. and Shalit, Uri and Sontag, David A.},
  title = {Deep Kalman Filters},
  journal = {arXiv preprint arXiv:1511.05121},
  year = {2015}
}

@inproceedings{koh2020concept,
  title = {Concept Bottleneck Models},
  author = {Koh, Pang Wei and Nguyen, Thao and Tang, Yew Siang and Mussmann, Stephen and Pierson, Emma and Kim, Been and Liang, Percy},
  booktitle = {International Conference on Machine Learning},
  year = {2020}
}
\bibliographystyle{iclr2026_conference}

\appendix
\appendix
\section{Identifiability Proof}
\label{app:identifiability}

This appendix gives a self-contained formalization of the identifiability argument in Section~\ref{sec:method}. Under iVAE-style conditions \citep{khemakhem2020iva} on the innovation prior conditioned on auxiliary and regime, innovations are identifiable up to permutation and component-wise affine maps; diagonal (or block-diagonal companion) dynamics then propagate this equivalence class to the latent factors without rotational ambiguity, in contrast to classical Gaussian DFMs \citep{stock2002}.

\subsection{Setup and notation}

The generative model includes regime from the start:
\begin{itemize}
    \item \textbf{Innovations:} $\boldsymbol{\eta}_t \in \mathbb{R}^r$ with conditional prior $p(\boldsymbol{\eta}_t \mid \mathbf{u}_t, \mathbf{e}_t)$ in exponential-family form (Eq.~\ref{eq:innovation_prior}), where the auxiliary $\mathbf{u}_t$ is observed and the regime embedding $\mathbf{e}_t = \sum_{k=1}^K \pi_{t,k} \mathbf{e}_k$ is a \emph{deterministic} function of $\mathbf{u}_t$ via $\boldsymbol{\pi}_t = \mathrm{softmax}(\mathrm{RegimeNet}(\mathbf{u}_t))$; $K=1$ is the single-regime case.
    \item \textbf{Dynamics:} $\boldsymbol{f}_{t+1} = \bar{A}_t \boldsymbol{f}_t + \bar{B}_t \boldsymbol{\eta}_t$ with diagonal $\bar{A}_t = \sum_k \pi_{t,k} A_k$, $\bar{B}_t = \sum_k \pi_{t,k} B_k$ (or companion-form $\boldsymbol{s}_{t+1} = \mathcal{A} \boldsymbol{s}_t + \mathcal{B} \boldsymbol{\eta}_t$, $\boldsymbol{f}_t = \mathcal{C}\boldsymbol{s}_t$ with block-diagonal $\mathcal{A}$, $\mathcal{B}$). No stochastic regime sampling: $\boldsymbol{\pi}_t$ is fixed given $\mathbf{u}_t$, so the recurrence is deterministic and component-wise.
    \item \textbf{Observations:} $\boldsymbol{y}_t = g(\boldsymbol{f}_t) + \boldsymbol{\varepsilon}_t$, $\boldsymbol{\varepsilon}_t \sim \mathcal{N}(\boldsymbol{0}, \sigma^2 I_N)$, with $g$ injective and $\sigma^2$ fixed.
\end{itemize}

\begin{definition}[Identifiability up to a class]
\label{def:identifiability}
Innovations (or factors) are \emph{identifiable up to a class of transformations} $\mathcal{T}$ if, for any two parameterizations that induce the same $p(\boldsymbol{y}_{1:T}, \boldsymbol{u}_{1:T})$, the corresponding latent variables are related by a transformation in $\mathcal{T}$ almost surely.
\end{definition}

\subsection{Companion form \texorpdfstring{(AR($p$))}{(AR(p))}}
\label{sec:companion_form}

For AR($p$) we stack the current and $p-1$ lagged factor vectors into $\boldsymbol{s}_t = (\boldsymbol{f}_t^\top, \boldsymbol{f}_{t-1}^\top, \ldots, \boldsymbol{f}_{t-p+1}^\top)^\top \in \mathbb{R}^{rp}$, which obeys $\boldsymbol{s}_{t+1} = \mathcal{A}\boldsymbol{s}_t + \mathcal{B}\boldsymbol{\eta}_t$ with block-diagonal $\mathcal{A}$, $\mathcal{B}$. Unrolling gives $\boldsymbol{s}_t = \mathcal{A}^t \boldsymbol{s}_0 + \sum_{j=0}^{t-1} \mathcal{A}^j \mathcal{B}\, \boldsymbol{\eta}_{t-1-j}$. With $\mathcal{C}$ such that $\boldsymbol{f}_t = \mathcal{C}\boldsymbol{s}_t$, the factors are $\boldsymbol{f}_t = \mathcal{C}\mathcal{A}^t \boldsymbol{s}_0 + \sum_{j=0}^{t-1} \mathcal{H}_j \boldsymbol{\eta}_{t-1-j}$ where $\mathcal{H}_j := \mathcal{C}\mathcal{A}^j \mathcal{B}$ is the impulse response at lag $j$. This allows efficient FFT- or Krylov-based evaluation for long horizons \citep{spacetime2023}.

\subsection{Main result}

\begin{theorem}[Identifiability of dynamic factors]
\label{thm:temporal_identifiability}
Under the generative model above, assume:
\begin{enumerate}
    \item The innovation prior $p(\boldsymbol{\eta}_t \mid \mathbf{u}_t, \mathbf{e}_t)$ is conditional exponential-family with linearly independent sufficient statistics (component-wise product form). The natural-parameter map $\boldsymbol{\lambda}(\mathbf{u}, \mathbf{e})$ has full column rank for at least $r+1$ distinct values of $(\mathbf{u}, \mathbf{e})$ (or a set spanning a full-rank subspace). Here $\mathbf{e}$ is the regime embedding, deterministic given $\mathbf{u}$.
    \item $\boldsymbol{\pi}_t$ depends only on $\mathbf{u}_t$; dynamics use expected matrices $\bar{A}_t$, $\bar{B}_t$ with diagonal $A_k$, $B_k$ per regime.
    \item The decoder $g$ is injective on the support of $\boldsymbol{f}_t$.
    \item Observation noise has full support, admits deconvolution, and $\sigma^2$ is fixed (not learnable).
    \item Base measures have full support, $r \le N$, and all mappings are measurable and integrable.
\end{enumerate}
Then the innovations $\{\boldsymbol{\eta}_t\}_{t=1}^T$ are identifiable from $(\boldsymbol{y}_{1:T}, \boldsymbol{u}_{1:T})$ up to $\mathcal{T}$ (Definition~\ref{def:identifiability}, with $\mathcal{T}$ the class of permutation and component-wise affine maps; or monotone invertible under additional regularity). The latent factor trajectories $\{\boldsymbol{f}_t\}_{t=1}^T$ and the associated dynamic parameters are identifiable up to the same class.
\end{theorem}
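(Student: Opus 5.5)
The plan is to reduce the dynamic problem to the static iVAE identifiability theorem of \citet{khemakhem2020iva}, applied at the innovation level, and then transport the resulting equivalence class through the deterministic, component-wise dynamics.

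\textbf{Step 1: reduce to a map from innovations to observations.} Fix the initial state $\boldsymbol{f}_0$ (or $\boldsymbol{s}_0$). By the unrolling in Section~\ref{sec:companion_form}, the whole trajectory $\boldsymbol{f}_{1:T}$ is an affine function $\boldsymbol{f}_{1:T} = L\,\boldsymbol{\eta}_{0:T-1} + c$. Here $L$ is block lower-triangular, with diagonal blocks $\bar{B}_t$ and off-diagonal blocks $\mathcal{H}_j$.

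Because $\boldsymbol{\pi}_t$ is a deterministic function of $\mathbf{u}_t$ (assumption 2), $L$ and $c$ are deterministic given $\mathbf{u}_{1:T}$. Every block of $L$ is diagonal: products of diagonal matrices are diagonal, and the companion blocks act per factor. Assuming $\bar{B}_t$ is invertible, $L$ is invertible.

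The stacked observation map $\boldsymbol{\eta} \mapsto (g(\boldsymbol{f}_t))_{t}$ is therefore injective, as a composition of injective maps (assumption 3). The stacked innovations $\boldsymbol{\eta}_{0:T-1}$ have a factorized conditional exponential-family prior given $\mathbf{u}_{1:T}$: independence across $t$ holds because $\mathbf{e}_t$ is deterministic, so no latent regime couples time steps. This places us exactly in the iVAE setting, with latent dimension $rT$ and an injective mixing $G_{\theta}(\boldsymbol{\eta}) = (g(\boldsymbol{f}_t(\boldsymbol{\eta})))_t$ plus Gaussian noise.

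\textbf{Step 2: invoke the iVAE argument.} Suppose two parameterizations yield equal $p(\boldsymbol{y}_{1:T}\mid \mathbf{u}_{1:T})$. Assumption 4 lets us deconvolve the known Gaussian noise, which gives equality of the noiseless pushforwards $G_\theta{}_\#p_\theta(\boldsymbol{\eta}\mid\mathbf{u}) = G_{\tilde\theta}{}_\#p_{\tilde\theta}(\boldsymbol{\eta}\mid\mathbf{u})$.

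Take a change of variables and log-densities, then difference the identity across the $r+1$ (per-step) points where $\boldsymbol{\lambda}$ has full rank (assumption 1). This yields $\mathbf{T}(\boldsymbol{\eta}) = M\tilde{\mathbf{T}}(\tilde{\boldsymbol{\eta}}) + b$ with $M$ invertible.

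The Khemakhem et al.\ argument, together with linear independence of the sufficient statistics and full-support base measures (assumption 5), then shows that $M$ is a scaled permutation. In the sufficient-statistic-dimension-one case this gives $\tilde\eta_{i,t} = a_i\eta_{\sigma(i),t} + b_i$, i.e.\ the class $\mathcal{T}$. Under additional regularity one gets the monotone invertible version.

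One subtlety must be handled here. The permutation obtained is a priori on all $rT$ coordinates. It must be shown to be the same $\sigma$ at every $t$ and not to mix time indices. I would argue this by noting that the time index is part of the observed conditioning: $\boldsymbol{\lambda}$ at step $t$ depends only on $\mathbf{u}_t$, so varying $\mathbf{u}_t$ alone affects only the block for step $t$, which pins the permutation to act within each block. Stationarity of the parameter maps $h$ across $t$ then forces a common $\sigma$ and common $(a_i,b_i)$.

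\textbf{Step 3: propagate to factors and dynamics.} Write the relation as $\tilde{\boldsymbol{\eta}}_t = PD\boldsymbol{\eta}_t + \mathbf{b}$. Substituting into the recursion, define $\tilde{\boldsymbol{f}}_t = PD'\boldsymbol{f}_t + \mathbf{c}_t$, where $\tilde{A}_t = P\bar{A}_tP^\top$, $\tilde{B}_t = P D' \bar{B}_t D^{-1} P^\top$, and the offsets $\mathbf{c}_t$ are absorbed into $\boldsymbol{f}_0$ and the decoder.

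All of these matrices remain diagonal, because conjugating a diagonal matrix by a permutation yields a diagonal matrix. Consequently the factors and dynamic parameters lie in the same class $\mathcal{T}$.

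Conversely, any non-diagonal reparametrization would break the diagonal constraint. The one exception is when two factors share identical $(A,B)$ and identical $\boldsymbol{\lambda}$, but the full-rank condition already excludes this.

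\textbf{Main obstacle.} I expect the main obstacle to be Step 2's block-structure and uniformity claim across $t$, since iVAE alone only gives a permutation on the stacked $rT$ coordinates. The rigour of "deconvolution" with $g$ composed through the $\boldsymbol{\eta}$-dependent $L$ should be routine by comparison.
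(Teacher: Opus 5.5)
Your stacking catches something the paper's Step~1 glosses over. The paper applies the iVAE result to each pair $(\boldsymbol{y}_t,\mathbf{u}_t)$ as though $\boldsymbol{y}_t$ were an injective function of $\boldsymbol{\eta}_t$ plus noise, whereas $\boldsymbol{y}_t$ depends on $\boldsymbol{\eta}_{0:t-1}$ through $\boldsymbol{f}_t$.

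\textbf{Gap 1 (variation in the auxiliary).} The reduction to one $rT$-dimensional iVAE problem breaks exactly where you resolve your ``main obstacle''. The iVAE rank condition for the stacked model needs $rTk+1$ values of the \emph{whole sequence} $\mathbf{u}_{1:T}$ whose stacked natural-parameter differences form an invertible $rTk\times rTk$ matrix. Your block-diagonal construction (perturb $\mathbf{u}_t$ alone, hold the other coordinates at a base sequence) needs the support of the joint law of $\mathbf{u}_{1:T}$ to contain such one-coordinate perturbations at every $t$. Assumption~1 is only a per-step rank condition on $\boldsymbol{\lambda}(\mathbf{u},\mathbf{e})$, tailored to the paper's per-step use of iVAE, and gives nothing of the kind. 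In the model's default use, with $\mathbf{u}_t$ a time index along one observed path, $\mathbf{u}_{1:T}$ takes a single value, so there is nothing to difference.

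\textbf{Gap 2 (the mixing depends on the auxiliary when $K>1$).} Your $L$ is built from $\bar{A}_t,\bar{B}_t$, which move with $\boldsymbol{\pi}_t(\mathbf{u}_t)$. Perturbing $\mathbf{u}_t$ therefore changes $G^{-1}$ itself: the same observation corresponds to different $\boldsymbol{\eta}$ under the two auxiliary values. The differencing then no longer isolates $\mathbf{T}(\boldsymbol{\eta})^\top(\boldsymbol{\lambda}-\boldsymbol{\lambda}')$. As written, Step~2 needs an extra support assumption on $\mathbf{u}_{1:T}$ and also needs $K=1$.

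\textbf{Gap 3 (Step~3 runs in the wrong direction).} Defining $\tilde{A}_t=P\bar{A}_tP^\top$ and $\tilde{B}_t=PD'\bar{B}_tD^{-1}P^\top$ only shows that the class $\mathcal{T}$ is \emph{attained}. Identifiability requires the converse: the factors and dynamics of an \emph{arbitrary} observationally equivalent $\tilde{\theta}$, with its own $\tilde{\bar{A}}_t,\tilde{\bar{B}}_t,\tilde{\boldsymbol{f}}_0,\tilde{g}$, must be related to yours by $\mathcal{T}$. Innovation-level identification alone does not give this. Indeed, $\tilde{\boldsymbol{f}}_t=\sum_j\tilde{\mathcal{H}}_j\tilde{\boldsymbol{\eta}}_{t-1-j}+\cdots$ is even a function of $\boldsymbol{f}_t$ only if $\tilde{\mathcal{H}}_jPD=PD'\mathcal{H}_j$ holds at every lag with a common $D'$.

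The missing ingredient is that both decoders are time-invariant, so $\psi:=\tilde{g}^{-1}\circ g$ satisfies $\tilde{\boldsymbol{f}}_t=\psi(\boldsymbol{f}_t)$ for every $t$.
\begin{itemize}
\item At $t=1$, with $\boldsymbol{f}_0$ fixed, this forces $\psi$ to inherit the permuted component-wise form of the innovation map $\boldsymbol{h}$. Since $\psi$ does not depend on $t$, the permutation is automatically common to all steps.
\item At later steps the two recursions give $\psi(\bar{A}\boldsymbol{f}+\bar{B}\boldsymbol{\eta})=\tilde{\bar{A}}\psi(\boldsymbol{f})+\tilde{\bar{B}}\boldsymbol{h}(\boldsymbol{\eta})$ for a.e.\ independent $(\boldsymbol{f},\boldsymbol{\eta})$. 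Component-wise this is a Pexider equation, whose measurable solutions are affine when $\bar{A}$ and $\bar{B}$ have nonzero diagonals.
\end{itemize}

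\textbf{On the affine form.} The argument above is where the \emph{affine} form has to come from. Khemakhem et al.'s $k=1$ result requires non-monotone sufficient statistics, and it gives an affine relation between the sufficient statistics of matched components, not $\tilde{\eta}_{i,t}=a_i\eta_{\sigma(i),t}+b_i$. The paper's proof makes the same leap when it cites that work for ``component-wise affine'' maps.
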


\paragraph{Proof sketch.}
Two parameterizations that yield the same $p(\boldsymbol{y}_{1:T}, \boldsymbol{u}_{1:T})$ must agree on the marginal over $(\boldsymbol{\eta}_t, \mathbf{u}_t, \mathbf{e}_t)$; the iVAE argument \citep{khemakhem2020iva} then pins innovations up to $\mathcal{T}$. Diagonal dynamics preserve this class on factors; injectivity of $g$ and fixed $\sigma^2$ prevent further ambiguity.

\subsection{Proof of Theorem~\ref{thm:temporal_identifiability}}

\begin{proof}
Let $(\theta, \boldsymbol{\eta}, \boldsymbol{f})$ and $(\tilde{\theta}, \tilde{\boldsymbol{\eta}}, \tilde{\boldsymbol{f}})$ be two parameterizations that induce the same $p(\boldsymbol{y}_{1:T}, \boldsymbol{u}_{1:T})$. We show they differ only by a transformation in $\mathcal{T}$.

\medskip
\noindent\textbf{Step 1 (Innovations).} The conditioning variable $(\mathbf{u}_t, \mathbf{e}_t)$ is deterministic given $\mathbf{u}_t$ and observed, so it is a valid auxiliary for iVAE. The prior has the form
\begin{equation}
p(\boldsymbol{\eta}_t \mid \mathbf{u}_t, \mathbf{e}_t)
=
\prod_{i=1}^r
h_i(\eta_{i,t})
\exp\!\left(
\mathbf{T}_i(\eta_{i,t})^\top \boldsymbol{\lambda}_i(\mathbf{u}_t, \mathbf{e}_t)
-
A_i(\boldsymbol{\lambda}_i(\mathbf{u}_t, \mathbf{e}_t))
\right).
\end{equation}
By \citet{khemakhem2020iva}, any alternative parameterization that yields the same joint distribution of $(\boldsymbol{y}_t, \mathbf{u}_t)$ (hence the same marginal over $(\boldsymbol{\eta}_t, \mathbf{u}_t, \mathbf{e}_t)$) satisfies $\tilde{\eta}_{i,t} = h_i(\eta_{\pi(i),t})$ a.s.\ for some permutation $\pi$ and component-wise affine maps $\{h_i\}$ (or monotone invertible under additional regularity). The full-rank condition on $\boldsymbol{\lambda}(\mathbf{u}, \mathbf{e})$ ensures the class is no larger; thus innovations are identifiable up to $\mathcal{T}$.

\noindent\textbf{Step 2 (Propagation to factors).} With diagonal $\bar{A}_t$, $\bar{B}_t$ (or block-diagonal companion $\mathcal{A}$, $\mathcal{B}$), the dynamics are linear and component-wise: $\boldsymbol{f}_t = \mathcal{H}_t(\boldsymbol{\eta}_{1:t})$ with no cross-dimension mixing. From Step~1, $\tilde{\boldsymbol{\eta}}_t = \boldsymbol{h}(\boldsymbol{\eta}_{\pi,t})$. Applying the same component-wise map yields $\tilde{f}_{i,t} = h_i(f_{\pi(i),t})$ a.s.\ So $\mathcal{T}$ is preserved on factors; the dynamics introduce no rotational or linear mixing.

\noindent\textbf{Step 3 (Observation model).} For the same $p(\boldsymbol{y}_t \mid \mathbf{u}_t)$, the conditional mean $g(\boldsymbol{f}_t)$ is determined by the distribution of $\boldsymbol{f}_t \mid \mathbf{u}_t$. From Step~2, $\tilde{\boldsymbol{f}}_t = \boldsymbol{h}(\boldsymbol{f}_{\pi,t})$. Injectivity of $g$ and the component-wise structure of $\boldsymbol{h}$ imply the decoder is determined up to $\mathcal{T}$. Fixed $\sigma^2$ prevents matching the observation distribution by rescaling noise and preserves identifiability.
\end{proof}

\subsection{Remarks}

Identifiability is established at the innovation level via variation of $\boldsymbol{\lambda}$ with $(\mathbf{u}, \mathbf{e})$, then inherited by $\{\boldsymbol{f}_t\}$ through deterministic, component-wise dynamics. Companion-form and Krylov implementations are for computation only; $\mathcal{H}_t$ need not be invertible. Under variational inference, identifiability is a property of the generative model; the variational posterior may depend on the full observation sequence without changing the result.

\paragraph{What breaks identifiability.} (i) Conditioning on a continuous latent $\mathbf{z}_t$ (e.g.\ $\boldsymbol{\eta}_t \sim p(\boldsymbol{\eta} \mid \mathbf{u}_t, \mathbf{z}_t)$) breaks the iVAE argument because the conditioning variable is no longer fixed. (ii) Feeding data or attention into the innovation (e.g.\ $\boldsymbol{\eta}_t = \boldsymbol{\mu}(\mathbf{u}_t, \boldsymbol{y}_{t-L:t}) + \ldots$) makes the auxiliary non-exogenous and identifiability is lost.

\paragraph{Encoder.} The variational encoder may depend on $\mathbf{e}_t$; identifiability is a property of the generative model, and the posterior may use the full sequence.

\section{Degeneracy of Gaussian Innovations}
\label{app:gaussian_degeneracy}

\subsection{Overview}

This appendix explains why the innovation process $\{\boldsymbol{\eta}_t\}$ cannot be identified when it is Gaussian, even with auxiliary variables and temporal dynamics \citep{hyvarinen2000ica,khemakhem2020iva}. That degeneracy motivates the use of non-Gaussian exponential-family innovations in iVDFM (Section~\ref{sec:method}). We present the argument for the \emph{single-regime} case ($K=1$): the regime embedding $\mathbf{e}_t$ is then fixed, so conditioning on it is implicit and we write the prior as $p(\boldsymbol{\eta}_t \mid \boldsymbol{u}_t)$. The same degeneracy holds for $K>1$ when conditioning on both $\boldsymbol{u}_t$ and $\mathbf{e}_t$; the Gaussian family remains closed under linear maps regardless of how the conditional parameters depend on the embedding. This appendix is distinct from the ``constant context'' degeneracy in the ablation (Appendix~\ref{app:ablation_study}), where $u_t \equiv u_0$ removes variation in the prior; here we assume $u_t$ varies but the innovation family is Gaussian.

\subsection{Setup: Gaussian innovations in exponential-family form}

Suppose innovations are conditionally Gaussian,
\begin{equation}
\boldsymbol{\eta}_t \mid \boldsymbol{u}_t
\sim
\mathcal{N}\!\left(\boldsymbol{\mu}(\boldsymbol{u}_t), \Sigma(\boldsymbol{u}_t)\right),
\end{equation}
with $\Sigma(\boldsymbol{u}_t)$ full rank. The log-density is
\begin{equation}
\log p(\boldsymbol{\eta}_t \mid \boldsymbol{u}_t)
=
-\tfrac{1}{2}
\boldsymbol{\eta}_t^\top \Sigma(\boldsymbol{u}_t)^{-1} \boldsymbol{\eta}_t
+
\boldsymbol{\eta}_t^\top \Sigma(\boldsymbol{u}_t)^{-1} \boldsymbol{\mu}(\boldsymbol{u}_t)
+
c(\boldsymbol{u}_t),
\end{equation}
where $c(\boldsymbol{u}_t)$ does not depend on $\boldsymbol{\eta}_t$. In exponential-family form, the sufficient statistics are $\mathbf{T}(\boldsymbol{\eta}_t) = (\boldsymbol{\eta}_t,\; \operatorname{vec}(\boldsymbol{\eta}_t \boldsymbol{\eta}_t^\top))$ and the natural parameters $\boldsymbol{\lambda}(\boldsymbol{u}_t)$ are functions of $\Sigma(\boldsymbol{u}_t)^{-1}$ and $\Sigma(\boldsymbol{u}_t)^{-1}\boldsymbol{\mu}(\boldsymbol{u}_t)$. The log-density therefore lies in the span of at most $r + r(r+1)/2$ basis functions (linear and quadratic in $\boldsymbol{\eta}_t$). Crucially, this span is \emph{invariant under invertible linear maps}: for any $R \in \mathbb{R}^{r \times r}$ invertible, $\boldsymbol{\eta}_t \mapsto R \boldsymbol{\eta}_t$ preserves the quadratic form, so the likelihood cannot distinguish $\boldsymbol{\eta}_t$ from $\tilde{\boldsymbol{\eta}}_t = R \boldsymbol{\eta}_t$.

\subsection{Main result: Gaussian innovations are not identifiable}

The identifiability result of \citet{khemakhem2020iva} requires the natural-parameter map $\boldsymbol{\lambda}(\boldsymbol{u})$ to span a full-rank subspace over distinct $\boldsymbol{u}$, so that observing multiple $\boldsymbol{u}$ pins down the innovation components. In the Gaussian case the equivalence class of parameters giving the same observation distribution is too large: any invertible linear reparameterization preserves the Gaussian family.

\begin{proposition}[Gaussian innovations are not identifiable]
\label{prop:gaussian_degen}
Let $\boldsymbol{\eta}_t \mid \boldsymbol{u}_t \sim
\mathcal{N}(\boldsymbol{\mu}(\boldsymbol{u}_t), \Sigma(\boldsymbol{u}_t))$
with $\Sigma(\boldsymbol{u}_t)$ full rank. For any invertible
$R \in \mathbb{R}^{r \times r}$, the transformed innovations
$\tilde{\boldsymbol{\eta}}_t = R \boldsymbol{\eta}_t$ are also Gaussian,
\begin{equation}
\tilde{\boldsymbol{\eta}}_t \mid \boldsymbol{u}_t
\sim
\mathcal{N}\!\left(R \boldsymbol{\mu}(\boldsymbol{u}_t),\;
R \Sigma(\boldsymbol{u}_t) R^\top \right),
\end{equation}
and the likelihood $p(\boldsymbol{y}_{1:T} \mid \boldsymbol{u}_{1:T})$ is unchanged. Thus innovations are identifiable at most up to invertible linear transformation, not up to permutation and component-wise affine maps as in the non-Gaussian case (Appendix~\ref{app:identifiability}).
\end{proposition}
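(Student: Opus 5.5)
The proof is constructive: I will exhibit, for every invertible $R$, a second parameterization that produces exactly the same law of $\boldsymbol{y}_{1:T}$ given $\boldsymbol{u}_{1:T}$, while its innovations are $\tilde{\boldsymbol{\eta}}_t = R\boldsymbol{\eta}_t$.

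\textbf{Step 1: the innovation law.} The distributional claim is the standard affine-map property of Gaussians. I will verify it through the characteristic function, $\mathbb{E}[e^{i\boldsymbol{s}^\top R\boldsymbol{\eta}_t}\mid \boldsymbol{u}_t]=\exp(i\boldsymbol{s}^\top R\boldsymbol{\mu} - \tfrac12 \boldsymbol{s}^\top R\Sigma R^\top \boldsymbol{s})$. Since $R$ is invertible, $R\Sigma(\boldsymbol{u}_t)R^\top$ is again full rank. The transformed family therefore lies in the same conditional Gaussian family, with parameter maps $\tilde{\boldsymbol{\mu}}=R\boldsymbol{\mu}$ and $\tilde\Sigma = R\Sigma R^\top$.

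\textbf{Step 2: compensate in the rest of the model.} I will define $\tilde{\boldsymbol{f}}_t := R\boldsymbol{f}_t$ and $\tilde{g} := g\circ R^{-1}$.
\begin{itemize}
\item $\tilde g$ is injective because $g$ is.
\item The dynamics become $\tilde{\boldsymbol{f}}_{t+1} = R\bar{A}_tR^{-1}\tilde{\boldsymbol{f}}_t + R\bar{B}_tR^{-1}\tilde{\boldsymbol{\eta}}_t$, with initial state $\tilde{\boldsymbol f}_0 = R\boldsymbol f_0$. The companion form is handled the same way, replacing $R$ by $I_p\otimes R$ on $\boldsymbol{s}_t$.
\item Pathwise, $\tilde g(\tilde{\boldsymbol f}_t) = g(\boldsymbol f_t)$. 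With the noise $\boldsymbol\varepsilon_t$ left unchanged, the joint law of $\boldsymbol{y}_{1:T}$ given $\boldsymbol{u}_{1:T}$ is identical, and $\sigma^2$ is untouched.
\end{itemize}

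\textbf{Step 3: conclude.} Since $R$ is arbitrary, for instance a rotation that is not a scaled permutation, the class of latents compatible with the data contains all of $GL(r)$ applied to $\boldsymbol{\eta}_t$. The innovations are therefore not identifiable up to permutation and component-wise affine maps; at best they are identifiable up to invertible linear maps. This is the Gaussian analogue of the DFM rotation ambiguity noted in the related-work section.

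\textbf{Main obstacle: diagonal dynamics.} The conjugated matrices $R\bar A_t R^{-1}$ and $R\bar B_t R^{-1}$ are generally not diagonal, so Step 2 leaves the diagonal model class. I would try three routes, in order.
\begin{itemize}
\item First, state the proposition for the model class with the diagonality constraint relaxed, which I take to be its intended reading. The likelihood statement is then exact.
\item Second, in the diagonal class, restrict $R$ to commute with all $\bar A_t$ (e.g.\ a rotation within blocks of equal decay rates). Alternatively, absorb the dynamics into the Gaussian innovations: $\boldsymbol{f}_{1:T}$ is jointly Gaussian given $\boldsymbol u_{1:T}$, so mixing at the factor level is invisible to the likelihood.
\item Third, if the component-wise product form of Eq.~\ref{eq:innovation_prior} is required, note that it forces diagonal $\Sigma$, whereas $R\Sigma R^\top$ is not diagonal. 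Here I would argue via the whitening/rotation trick: when $\Sigma(\boldsymbol u)$ is isotropic, any orthogonal $R$ keeps the product form.
\end{itemize}
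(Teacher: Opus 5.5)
Your Steps 1--3 are correct and are essentially the paper's own argument: the Gaussian family is closed under $\boldsymbol{\eta}_t \mapsto R\boldsymbol{\eta}_t$, and pairing this with $\boldsymbol{f}_t \mapsto R\boldsymbol{f}_t$, $A \mapsto RAR^{-1}$, $\Sigma \mapsto R\Sigma R^\top$ and a correspondingly transformed decoder leaves the law of $\boldsymbol{y}_{1:T}$ given $\boldsymbol{u}_{1:T}$ unchanged (you are more explicit than the paper about $g\circ R^{-1}$, the initial state, and the companion form). Your worry that $R\bar{A}_tR^{-1}$, $R\bar{B}_tR^{-1}$ and $R\Sigma R^\top$ leave the diagonal/product-form class is justified, and the paper simply asserts it away, so your first route (reading the proposition in the unconstrained Gaussian class, as it is literally stated) is the right one; the other routes can only reach special cases, because when the diagonal variances vary non-proportionally in $\boldsymbol{u}$, $R\Sigma(\boldsymbol{u})R^\top$ is diagonal for every $\boldsymbol{u}$ only if $R$ is a scaled permutation.
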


Even when $\boldsymbol{\mu}(\boldsymbol{u}_t)$ or $\Sigma(\boldsymbol{u}_t)$ depend on $\boldsymbol{u}_t$, the variation stays within the same quadratic family, so the auxiliary variable cannot break rotational or scaling symmetry.

\paragraph{Example ($r=2$).}
For $\boldsymbol{\eta}_t \mid \boldsymbol{u}_t \sim \mathcal{N}(\boldsymbol{0}, \Sigma(\boldsymbol{u}_t))$, any orthogonal $Q$ gives $\tilde{\boldsymbol{\eta}}_t = Q \boldsymbol{\eta}_t \sim \mathcal{N}(\boldsymbol{0}, Q \Sigma(\boldsymbol{u}_t) Q^\top)$; the observation distribution is unchanged, so the model cannot distinguish the original from the rotated innovations.

\subsection{Implications for dynamic factor models}

In classical Gaussian dynamic factor models \citep{stock2002}, factors evolve as
\begin{equation}
\boldsymbol{f}_{t+1} = A \boldsymbol{f}_t + \boldsymbol{\eta}_t,
\qquad
\boldsymbol{\eta}_t \sim \mathcal{N}(\boldsymbol{0}, \Sigma),
\end{equation}
with a linear decoder. The joint distribution of observations is then invariant under
$\boldsymbol{f}_t \mapsto R \boldsymbol{f}_t$, $A \mapsto R A R^{-1}$, $\Sigma \mapsto R \Sigma R^\top$
for any invertible $R$---the well-known rotational indeterminacy. Temporal dependence or higher-order (e.g.\ companion-form) dynamics do not remove this ambiguity, because linear Gaussian dynamics remain closed under linear maps.

Restricting $A$ to be diagonal avoids cross-factor mixing in the dynamics but does not restore identifiability. The equivalence class is fixed at the \emph{innovation} level: any invertible $\tilde{\boldsymbol{\eta}}_t = R \boldsymbol{\eta}_t$ preserves the Gaussian family and can be paired with a transformed decoder and dynamics to yield the same observations. Diagonal $A$ only constrains how factors evolve over time, not the invariance of the innovation distribution under linear mixing.

\subsection{Contrast with non-Gaussian innovations and takeaway}

For a non-Gaussian exponential family with linearly independent sufficient statistics (e.g.\ Laplace, Student-$t$), the log-density is not invariant under general linear mixing \citep{hyvarinen2000ica}: $\tilde{\boldsymbol{\eta}}_t = R \boldsymbol{\eta}_t$ leaves the product form only when $R$ is a permutation of a scaled identity (permutation and component-wise scaling). Conditioning on $\boldsymbol{u}_t$ then yields variation in natural parameters that spans a full-rank subspace, so the iVAE conditions of \citet{khemakhem2020iva} hold and innovations are identifiable up to permutation and component-wise affine (or monotone invertible under regularity). iVDFM uses such non-Gaussian innovation priors (e.g.\ Laplace) so that identifiability is achieved at the innovation level and propagated by the dynamics (Theorem~\ref{thm:temporal_identifiability}).

The degeneracy of Gaussian innovations is thus a property of the Gaussian family, not of the dynamic structure. iVDFM avoids it by using conditional non-Gaussian innovations.

\section{Experiment Details}
\label{app:experiment_details}

This appendix gives details for the experiments in Section~\ref{sec:experiments}: factor recovery, causal intervention, and probabilistic forecasting.

\subsection{Factor Recovery}

\paragraph{Synthetic DGPs.}
We use two synthetic DGPs adapted from \citet{khemakhem2020iva} and \citet{andreini2020deep}. Static setting: latents are generated independently across time conditional on an auxiliary variable, with observations via a nonlinear mixing; $T=200$, $N=20$, $r=5$. Dynamic setting: latent factors follow AR dynamics driven by stochastic innovations, with observations via a nonlinear decoder; same $T$, $N$, $r$. This isolates the role of temporal dynamics while keeping measurement comparable.

\paragraph{Baselines.}
iVDFM is compared to: DFM \citep{stock2002} (linear dynamic factor model, identifiable only up to orthogonal rotations); DDFM \citep{andreini2020deep} (nonlinear measurement, no identifiability guarantees); VAE \citep{kingma2014vae} (no auxiliary conditioning); iVAE \citep{khemakhem2020iva} (identifiable VAE applied per time step with time as auxiliary, ignoring temporal dependence).

\paragraph{Metrics.}
Let $F \in \mathbb{R}^{T \times r}$ be ground-truth factors and $\hat{F} \in \mathbb{R}^{T \times r}$ recovered factors. MCC (higher better): max-weight assignment over Pearson correlations between standardized columns of $F$ and $\hat{F}$; $\text{MCC} = \frac{1}{r} \max_{\pi \in S_r} \sum_{i=1}^{r} |C_{i,\pi(i)}|$. Subspace distance (lower better): principal angle distance between column spans; with orthonormal bases $Q_F$, $Q_{\hat{F}}$ and singular values $\sigma_i$ of $Q_F^\top Q_{\hat{F}}$ (clamped to $[0,1]$), $\text{Subspace} = \frac{1}{r} \sum_{i=1}^{r} \arccos(\sigma_i)$. Smoothness (lower better): average $\ell_2$ step size of the recovered trajectory; DFM and DDFM attain low values by construction due to linear latent dynamics \citep{stock2002,andreini2020deep}. Trace~$R^2$ (higher better): fraction of true factor variance explained by the recovered span \citep{andreini2020deep}; range $[0,1]$.

\paragraph{Protocol.}
We run $N_{\mathrm{sim}} = 10$ simulations per suite (dynamic and static) with fixed seeds so all models see the same data; we report mean and std of each metric. Common settings: window $= 200$, max epochs $= 200$, learning rate $= 0.002$. iVDFM hyperparameters are tuned on the dynamic suite only (Optuna maximizing mean MCC over the same 10 seeds); the best configuration is used in both suites. Results: Table~\ref{tab:synthetic_factor_recovery}.

\subsection{Causal Intervention}

\paragraph{SCM.}
We use a minimal time-series SCM \citep{pearl2009causality} with exogenous innovations as the only source of randomness. State evolution: $\boldsymbol{f}_t = A \boldsymbol{f}_{t-1} + \boldsymbol{\epsilon}_t$, $\boldsymbol{f}_0 = \mathbf{0}$, with $A \in \mathbb{R}^{r \times r}$ stable. Observation map: $\boldsymbol{y}_t = C \boldsymbol{f}_t$ (base/linear) or $\boldsymbol{y}_t = g(\boldsymbol{f}_t)$ (general). Shocks $\boldsymbol{\epsilon}_t$ are i.i.d.\ non-Gaussian (e.g.\ Laplace), giving the causal graph $\boldsymbol{\epsilon}_t \rightarrow \boldsymbol{f}_t \rightarrow \boldsymbol{y}_t$.

\paragraph{Do-intervention and IRF.}
An intervention at $(t_0, k)$ sets $\epsilon_{t_0}^{(k)} = c$ and leaves other $\epsilon$ unchanged. The impulse response function (IRF) at horizon $h$ is the difference in expected observations at $t_0 + h$ under intervention vs.\ no intervention. Under the SCM with linear $C$, the ground-truth IRF is $\mathrm{IRF}_{\mathrm{true}}(h) = c \cdot C \, A^h \, \mathbf{e}_k \in \mathbb{R}^N$. Base SCM: $A = \rho I$; regime SCM: $A$ regime-dependent; chain SCM: $A$ upper triangular (factor $i$ influences $i+1$).

\paragraph{Model-implied IRF (iVDFM).}
iVDFM learns an innovation process $\boldsymbol{\eta}_t$ and dynamics $\boldsymbol{f}_t = \mathcal{F}(\boldsymbol{\eta}_{1:t})$, $\boldsymbol{y}_t = g_\theta(\boldsymbol{f}_t)$. Under the main-text identifiability conditions, $\boldsymbol{\eta}_t$ aligns with the SCM's $\boldsymbol{\epsilon}_t$ up to the same equivalence class (Appendix~\ref{app:identifiability}). We define the model-implied IRF by: (1) fitting on observational data; (2) constructing $\boldsymbol{\eta}^{\mathrm{do}}$ by setting $(\boldsymbol{\eta}^{\mathrm{do}}_{t_0})^{(k)} = c$; (3) propagating through learned dynamics to get $\boldsymbol{f}^{\mathrm{do}}$; (4) decoding baseline $\widehat{\boldsymbol{y}}_t$ and intervention $\boldsymbol{y}^{\mathrm{do}}_t$; (5) $\widehat{\mathrm{IRF}}(h) = \boldsymbol{y}^{\mathrm{do}}_{t_0 + h} - \widehat{\boldsymbol{y}}_{t_0 + h}$. This evaluation is defined only for iVDFM.

\paragraph{Metrics and protocol.}
We compare $\widehat{\mathrm{IRF}}$ to $\mathrm{IRF}_{\mathrm{true}}$ over horizon $\times$ variables. IRF-MSE (lower better): mean squared error over the grid. Sign accuracy (higher better): fraction of $(h,n)$ pairs with $\mathrm{sign}(\widehat{\mathrm{IRF}}_{h,n}) = \mathrm{sign}(\mathrm{IRF}_{\mathrm{true},h,n})$. We also report IRF-MAE and Pearson correlation (Appendix Table~\ref{tab:causal_intervention_full}). We generate series from three SCM variants (base, regime, chain), train iVDFM on observational data only, tune with Optuna (minimize IRF-MSE; structural parameters fixed). Five simulations per setting. Results: Table~\ref{tab:causal_intervention} (main text); Table~\ref{tab:causal_intervention_full} (stress tests). All experiments on a single GPU.

\subsection{Probabilistic Forecasting}

\paragraph{Datasets and task.}
We use five real-world multivariate time-series datasets: ETTh1 and ETTh2 (electricity transformer temperature, hourly), ETTm1 and ETTm2 (15-minute), and Weather. For each dataset we train on a fixed train/val split and evaluate out-of-sample forecasts at horizons $H \in \{96, 192, 336, 720\}$. Predictions are made in standardized (scaled) space: we fit a scaler on the training set and evaluate metrics without inverse-transforming, so that MSE and CRPS are comparable across variables and datasets.

\paragraph{Baselines.}
iVDFM is compared to: iTransformer \citep{liu2023itransformer}, TimeMixer \citep{wang2024timemixer}, TimeXer \citep{wang2024timexer}, DDFM \citep{andreini2020deep}, and MLPMultivariate (MLP). All baselines are trained and evaluated on the same splits and horizons. iVDFM uses fixed per-dataset hyperparameters (no Optuna at evaluation time) for reproducibility; see config and ablation (Appendix~\ref{app:ablation_study}) for details.

\paragraph{Metrics.}
CRPS (lower better): continuous ranked probability score computed from quantile forecasts (10th, 50th, 90th percentiles) over the forecast horizon and variables, then averaged. MSE (standardized, lower better): mean squared error between point forecasts (median) and targets in standardized space. Both are reported per (dataset, model, horizon); the main table (Table~\ref{tab:forecasting}) shows horizon-averaged values per dataset; the full table (Table~\ref{tab:forecasting_full}) reports all horizons.

\paragraph{Protocol.}
Common settings: window $= 96$, stride $= 25$, and up to 10 rolling forecast origins per (dataset, horizon). Each model is trained once per (dataset, horizon); metrics are aggregated over origins. iVDFM encoder/decoder/prior architecture and training (epochs, learning rate, etc.) are fixed per dataset in the experiment config. Results: Table~\ref{tab:forecasting} (main text, horizon-averaged); Table~\ref{tab:forecasting_full} (appendix, all horizons).

\section{Additional Results}
\label{app:additional_results}

\subsection{Synthetic Factor Recovery Visualization}
\label{app:synthetic_factor_recovery_full}

This appendix provides full-page visualizations of synthetic factor recovery for both \textbf{dynamic} and \textbf{static} DGPs. Each grid shows (i) the recovered factors for each model and (ii) the ground-truth factors, with columns matched by the MCC assignment used in our evaluation code. Recovered factors are rescaled to match the scale (mean and std) of the true factors for visual comparison; correlation and reported metrics are unchanged.

\begin{figure}[p]
  \centering
  \begin{subfigure}[t]{\linewidth}
    \includegraphics[width=\linewidth,height=0.44\textheight,keepaspectratio]{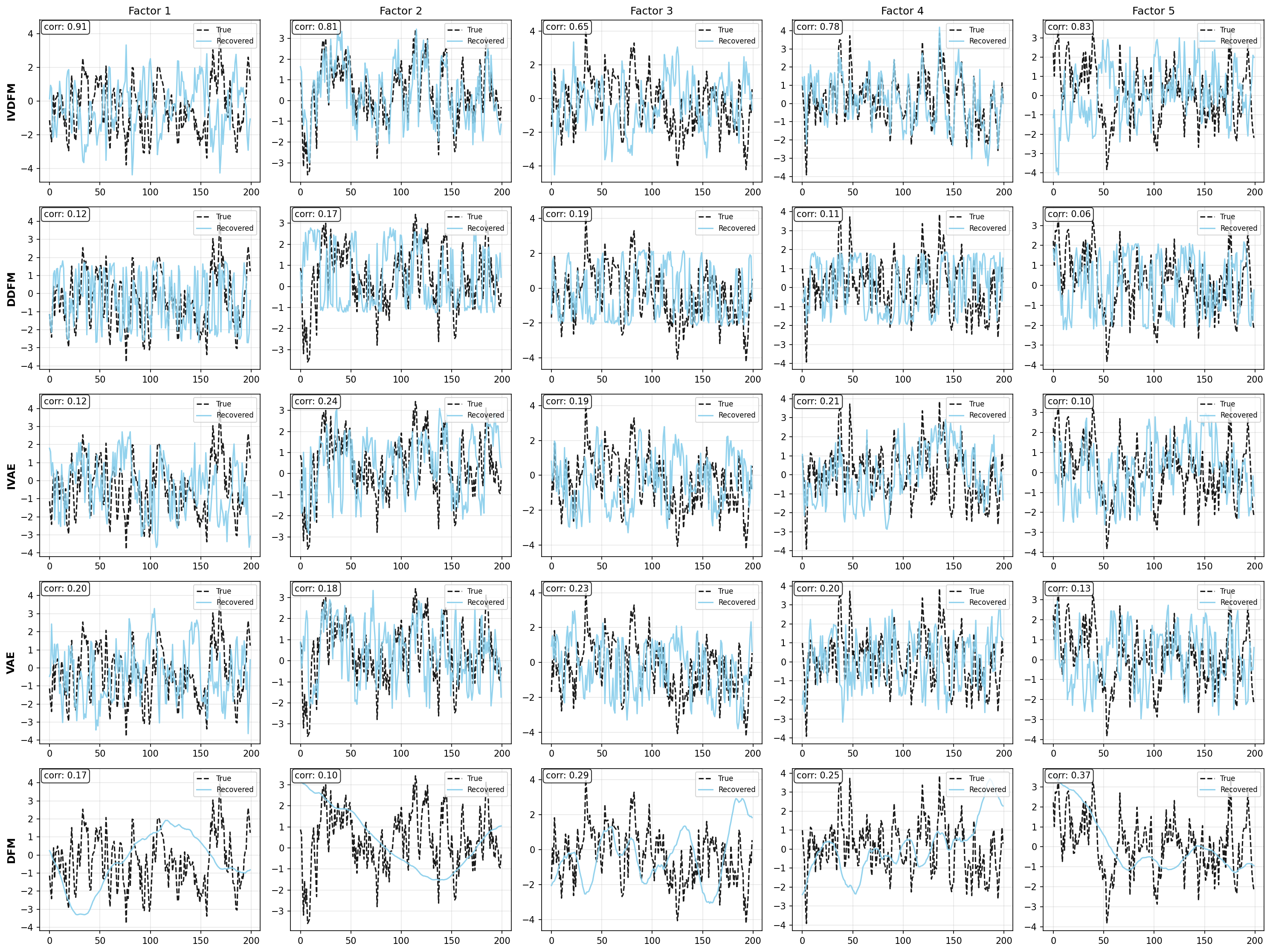}
    \caption{Dynamic synthetic factor recovery (5$\times$5). Columns aligned by MCC-based matching; recovered factors rescaled to true factor scale for visualization.}
    \label{fig:dynamic_factor_recovery_5x5}
  \end{subfigure}
  \vspace{0.5em}
  \begin{subfigure}[t]{\linewidth}
    \includegraphics[width=\linewidth,height=0.44\textheight,keepaspectratio]{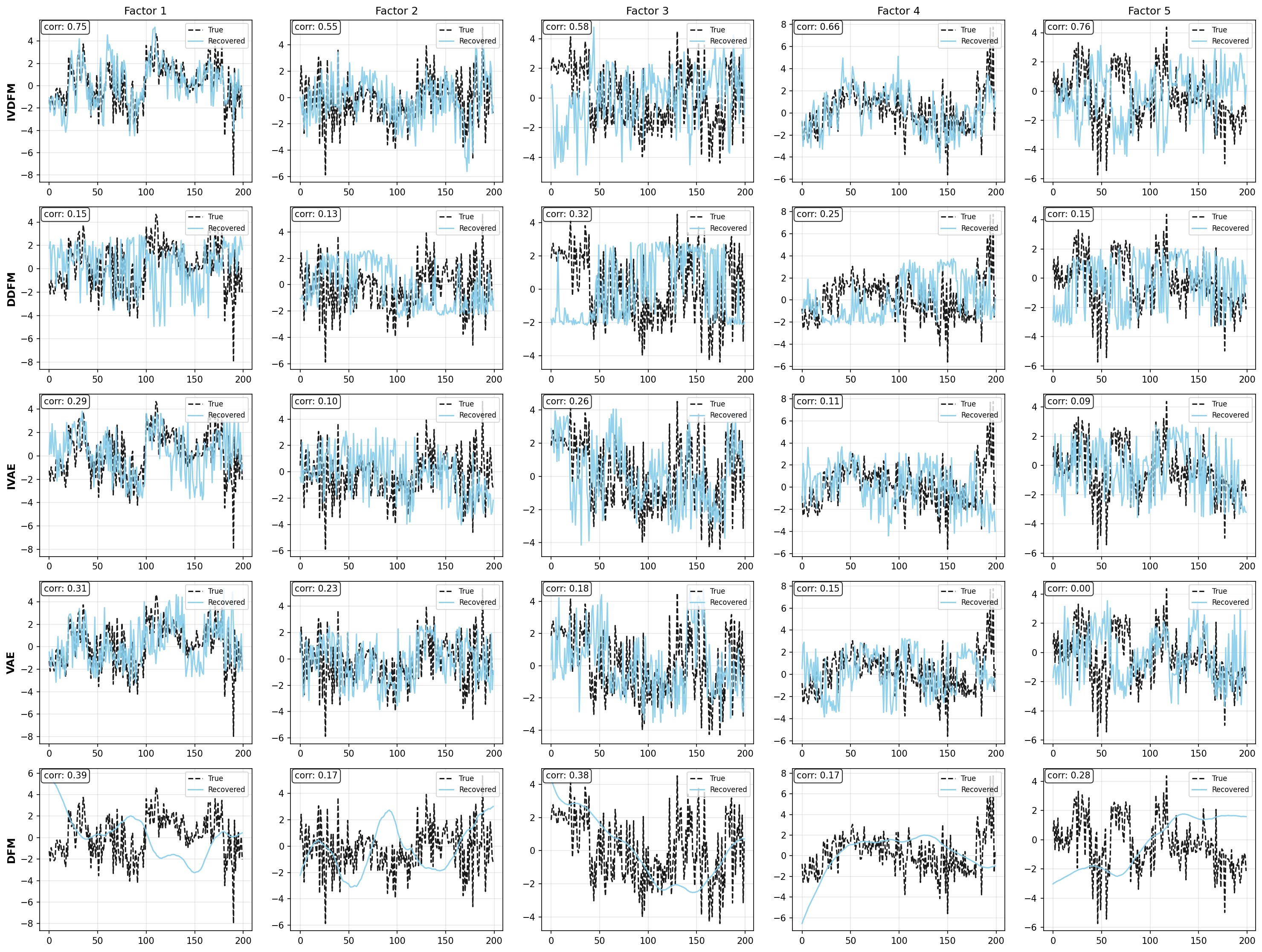}
    \caption{Static synthetic factor recovery (5$\times$5). Columns aligned by MCC-based matching; recovered factors rescaled to true factor scale for visualization.}
    \label{fig:static_factor_recovery_5x5}
  \end{subfigure}
\end{figure}

\subsection{Causal Intervention Stress Tests (Full Table)}
\label{app:causal_intervention_full}

Table~\ref{tab:causal_intervention_full} reports causal intervention stress tests (SCM comparison and varying $T$, misspecified $r$ for base, regime, and chain SCMs) for iVDFM. Main-text Table~\ref{tab:causal_intervention} reports the SCM comparison only.

\clearpage
\begin{table}[p]
    \centering
    \caption{iVDFM causal intervention stress tests: varying $T$ and misspecified $r$ (base, regime, chain SCMs). IRF-MSE, IRF-MAE: mean squared / absolute error; Sign Acc: fraction of (time step, variable) pairs with correct sign; IRF Corr: Pearson correlation with ground truth. Mean $\pm$ std over five simulations.}
    \label{tab:causal_intervention_full}
    \begin{tabular}{lcccc}
    \toprule
    Setting & IRF-MSE $\downarrow$ & IRF-MAE $\downarrow$ & Sign Acc $\uparrow$ & IRF Corr $\uparrow$ \\
    \midrule
    \multicolumn{5}{l}{\textit{SCM comparison ($T=200$, $r=3$)}} \\
    Base (linear) & $0.89 \pm 1.01$ & $0.45 \pm 0.23$ & $0.58 \pm 0.25$ & $0.21 \pm 0.75$ \\
    Regime & $1.28 \pm 0.92$ & $0.77 \pm 0.27$ & $0.60 \pm 0.27$ & $0.13 \pm 0.62$ \\
    Chain & $2.18 \pm 2.40$ & $0.97 \pm 0.40$ & $0.58 \pm 0.33$ & $0.20 \pm 0.62$ \\
    \addlinespace
    \multicolumn{5}{l}{\textit{Varying $T$ (base SCM, $r=3$)}} \\
    $T=100$ & $0.75 \pm 0.54$ & $0.44 \pm 0.16$ & $0.02 \pm 0.01$ & $-0.08 \pm 0.52$ \\
    $T=200$ & $0.89 \pm 1.01$ & $0.45 \pm 0.23$ & $0.58 \pm 0.25$ & $0.21 \pm 0.75$ \\
    $T=500$ & $0.76 \pm 0.72$ & $0.43 \pm 0.19$ & $0.60 \pm 0.28$ & $0.27 \pm 0.71$ \\
    $T=1000$ & $0.77 \pm 0.70$ & $0.43 \pm 0.19$ & $0.60 \pm 0.28$ & $0.24 \pm 0.69$ \\
    \addlinespace
    \multicolumn{5}{l}{\textit{Misspecified $r$ (base SCM, $T=200$, true $r=3$)}} \\
    $r=2$ (fit) & $1.16 \pm 1.17$ & $0.52 \pm 0.28$ & $0.41 \pm 0.30$ & $-0.16 \pm 0.77$ \\
    $r=3$ (fit) & $0.89 \pm 1.01$ & $0.45 \pm 0.23$ & $0.58 \pm 0.25$ & $0.21 \pm 0.75$ \\
    $r=4$ (fit) & $1.01 \pm 0.84$ & $0.50 \pm 0.22$ & $0.41 \pm 0.25$ & $-0.27 \pm 0.58$ \\
    $r=5$ (fit) & $0.79 \pm 0.59$ & $0.47 \pm 0.18$ & $0.42 \pm 0.28$ & $-0.04 \pm 0.81$ \\
    \addlinespace
    \multicolumn{5}{l}{\textit{Varying $T$ (Regime SCM, $r=3$)}} \\
    $T=100$ & $1.32 \pm 1.02$ & $0.77 \pm 0.26$ & $0.03 \pm 0.01$ & $0.25 \pm 0.28$ \\
    $T=200$ & $1.28 \pm 0.92$ & $0.77 \pm 0.27$ & $0.60 \pm 0.27$ & $0.13 \pm 0.62$ \\
    $T=500$ & $0.42 \pm 0.33$ & $0.22 \pm 0.09$ & $0.60 \pm 0.28$ & $0.29 \pm 0.69$ \\
    $T=1000$ & $0.44 \pm 0.31$ & $0.23 \pm 0.09$ & $0.58 \pm 0.24$ & $0.17 \pm 0.64$ \\
    \addlinespace
    \multicolumn{5}{l}{\textit{Misspecified $r$ (Regime SCM, $T=200$, true $r=3$)}} \\
    $r=2$ (fit) & $1.51 \pm 1.20$ & $0.82 \pm 0.34$ & $0.40 \pm 0.28$ & $-0.14 \pm 0.67$ \\
    $r=3$ (fit) & $1.28 \pm 0.92$ & $0.77 \pm 0.27$ & $0.60 \pm 0.27$ & $0.13 \pm 0.62$ \\
    $r=4$ (fit) & $1.43 \pm 1.08$ & $0.81 \pm 0.32$ & $0.41 \pm 0.26$ & $-0.10 \pm 0.66$ \\
    $r=5$ (fit) & $1.24 \pm 0.87$ & $0.77 \pm 0.29$ & $0.44 \pm 0.28$ & $0.10 \pm 0.73$ \\
    \addlinespace
    \multicolumn{5}{l}{\textit{Varying $T$ (Chain SCM, $r=3$)}} \\
    $T=100$ & $1.95 \pm 1.27$ & $0.98 \pm 0.27$ & $0.03 \pm 0.01$ & $0.11 \pm 0.28$ \\
    $T=200$ & $2.18 \pm 2.40$ & $0.97 \pm 0.40$ & $0.58 \pm 0.33$ & $0.20 \pm 0.62$ \\
    $T=500$ & $2.06 \pm 2.20$ & $0.95 \pm 0.37$ & $0.62 \pm 0.34$ & $0.17 \pm 0.58$ \\
    $T=1000$ & $2.07 \pm 2.23$ & $0.95 \pm 0.37$ & $0.62 \pm 0.34$ & $0.11 \pm 0.58$ \\
    \addlinespace
    \multicolumn{5}{l}{\textit{Misspecified $r$ (Chain SCM, $T=200$, true $r=3$)}} \\
    $r=2$ (fit) & $2.39 \pm 2.33$ & $1.03 \pm 0.39$ & $0.49 \pm 0.34$ & $-0.04 \pm 0.68$ \\
    $r=3$ (fit) & $2.18 \pm 2.40$ & $0.97 \pm 0.40$ & $0.58 \pm 0.33$ & $0.20 \pm 0.62$ \\
    $r=4$ (fit) & $2.04 \pm 1.36$ & $0.99 \pm 0.29$ & $0.48 \pm 0.29$ & $0.01 \pm 0.65$ \\
    $r=5$ (fit) & $1.43 \pm 0.67$ & $0.90 \pm 0.24$ & $0.62 \pm 0.32$ & $0.34 \pm 0.50$ \\
    \addlinespace
    \bottomrule
    \end{tabular}
\end{table}

\subsection{Forecasting Results}
\label{app:forecasting_full}

Table~\ref{tab:forecasting_full} reports CRPS and MSE (standardized) for all datasets and horizons. Dataset names are shown vertically; horizon is in a separate column.

\begin{sidewaystable}[p]
\centering
\caption{CRPS and MSE (standardized). Rows: dataset (horizon); columns: per-model CRPS and MSE (standardized).}
\label{tab:forecasting_full}
\resizebox{\linewidth}{!}{%
\begin{tabular}{llcccccccccccc}
\toprule
Dataset & H & \multicolumn{2}{c}{iVDFM} & \multicolumn{2}{c}{iTransformer} & \multicolumn{2}{c}{TimeMixer} & \multicolumn{2}{c}{TimeXer} & \multicolumn{2}{c}{DDFM} & \multicolumn{2}{c}{MLP} \\
\cmidrule(lr){3-4} \cmidrule(lr){5-6} \cmidrule(lr){7-8} \cmidrule(lr){9-10} \cmidrule(lr){11-12} \cmidrule(lr){13-14}
 & & CRPS & MSE (std.) & CRPS & MSE (std.) & CRPS & MSE (std.) & CRPS & MSE (std.) & CRPS & MSE (std.) & CRPS & MSE (std.) \\
\midrule
    \multirow{4}{*}{\rotatebox[origin=c]{90}{ETTh1}} & 96  & 0.2683 & 1.0907 & 0.6805 & 1.2977 & 0.2492 & 0.6444 & 0.5518 & 1.1886 & 0.3172 & 1.4191 & 0.2275 & 0.6376 \\
     & 192 & 0.2831 & 1.2684 & 0.6868 & 1.2812 & 0.2610 & 0.7614 & 0.5095 & 1.0815 & 0.3329 & 1.4004 & 0.2359 & 0.6832 \\
     & 336 & 0.2835 & 1.3041 & 0.6936 & 1.3671 & 0.2828 & 0.8134 & 0.5683 & 1.2357 & 0.3390 & 1.2963 & 0.2324 & 0.6647 \\
     & 720 & 0.2939 & 1.3684 & 0.7099 & 1.2374 & 0.3115 & 0.9848 & 0.5394 & 1.2506 & 0.3468 & 1.2502 & 0.2871 & 0.8883 \\
    \midrule
    \multirow{4}{*}{\rotatebox[origin=c]{90}{ETTh2}} & 96  & 0.2595 & 1.0307 & 0.5826 & 1.0791 & 0.2239 & 0.4882 & 0.4684 & 1.0791 & 0.2793 & 1.1222 & 0.1869 & 0.3663 \\
     & 192 & 0.2509 & 0.9018 & 0.5156 & 0.9543 & 0.2291 & 0.3539 & 0.4616 & 0.8881 & 0.2765 & 1.0148 & 0.1699 & 0.3365 \\
     & 336 & 0.2493 & 0.8795 & 0.4381 & 1.0926 & 0.2138 & 0.3688 & 0.4406 & 1.0679 & 0.2694 & 0.8808 & 0.1406 & 0.2302 \\
     & 720 & 0.3126 & 1.2534 & 0.4701 & 0.9460 & 0.2549 & 0.3948 & 0.4371 & 0.9258 & 0.2848 & 1.0320 & 0.1782 & 0.3487 \\
    \midrule
    \multirow{4}{*}{\rotatebox[origin=c]{90}{ETTm1}} & 96  & 0.3237 & 1.2088 & 0.7856 & 1.4427 & 0.2309 & 0.4816 & 0.6996 & 1.5203 & 0.3525 & 1.5346 & 0.1844 & 0.4514 \\
     & 192 & 0.3003 & 1.1049 & 0.7127 & 1.2799 & 0.2020 & 0.4598 & 0.6075 & 1.2303 & 0.3357 & 1.1912 & 0.1934 & 0.5271 \\
     & 336 & 0.3062 & 1.1663 & 0.6483 & 1.1407 & 0.2385 & 0.5738 & 0.5320 & 1.0434 & 0.3394 & 1.0315 & 0.2006 & 0.5211 \\
     & 720 & 0.3300 & 1.4042 & 0.6775 & 1.2881 & 0.2455 & 0.6719 & 0.5521 & 1.2070 & 0.3974 & 1.3520 & 0.2251 & 0.6366 \\
    \midrule
    \multirow{4}{*}{\rotatebox[origin=c]{90}{ETTm2}} & 96  & 0.2975 & 0.8705 & 0.8116 & 2.1123 & 0.2476 & 0.5310 & 0.6167 & 2.2748 & 0.5591 & 3.1659 & 0.1958 & 0.3789 \\
     & 192 & 0.3114 & 0.9570 & 0.6778 & 1.4037 & 0.2329 & 0.4155 & 0.5579 & 1.5041 & 0.4921 & 2.5408 & 0.1805 & 0.3908 \\
     & 336 & 0.3564 & 1.1939 & 0.5755 & 0.9626 & 0.2204 & 0.3914 & 0.4785 & 0.9450 & 0.4235 & 1.9969 & 0.1674 & 0.3612 \\
     & 720 & 0.3934 & 1.3712 & 0.5107 & 0.8413 & 0.2281 & 0.3988 & 0.4710 & 0.8558 & 0.3633 & 1.5285 & 0.1798 & 0.4608 \\
    \midrule
    \multirow{4}{*}{\rotatebox[origin=c]{90}{Weather}} & 96  & 0.1969 & 0.6850 & 0.2949 & 0.5596 & 0.1460 & 0.2936 & 0.2704 & 0.5658 & 0.1362 & 0.3011 & 0.1711 & 0.3376 \\
     & 192 & 0.2296 & 0.9232 & 0.2815 & 0.5242 & 0.1527 & 0.3527 & 0.2512 & 0.5182 & 0.1391 & 0.3517 & 0.1733 & 0.3535 \\
     & 336 & 0.2954 & 1.3521 & 0.3608 & 0.9217 & 0.1675 & 0.3532 & 0.3163 & 0.9116 & 0.1791 & 0.5065 & 0.1889 & 0.4050 \\
     & 720 & 0.3621 & 1.7743 & 0.3256 & 0.8048 & 0.1870 & 0.3630 & 0.2852 & 0.8330 & 0.2022 & 0.5977 & 0.1953 & 0.4044 \\
    \bottomrule
\end{tabular}%
}
\end{sidewaystable}

\section{Ablation Study}
\label{app:ablation_study}

We conduct an iVDFM-only ablation on ETTh1 and selected ETT/Weather splits to assess how the innovation prior, context, decoder, and evaluation protocol affect identifiability and forecast accuracy. All runs use the same pipeline as in Section~\ref{sec:probabilistic_forecasting}; we report CRPS and MSE (standardized) at horizon 96 unless noted, and reference main-text forecasting results where relevant.

\subsection{Design choices and degenerate cases}

\paragraph{Prior and context.}
A constant context $u_t \equiv u_0$ with a linear prior network yields a degenerate baseline: the conditional innovation prior no longer varies across time, so natural parameters lack the variation required for identifiability (distinct from Gaussian degeneracy, Appendix~\ref{app:gaussian_degeneracy}). We varied the innovation distribution (Laplace, Gaussian, Student-$t$) and context granularity (timestep vs.\ coarser regime buckets). Laplace outperformed Gaussian (MSE $\approx$1.33 vs.\ 1.14 on an early baseline); timestep-based context is used by default.

\paragraph{Decoder and post-hoc correction.}
We ablated the observation map with linear, residual (MLP added to linear), and full MLP decoders. On ETTh1 the linear decoder regressed (MSE $\approx$1.76 vs.\ MLP $\approx$1.00); the best residual (hidden dim 128, 1 layer) reached MSE $\approx$1.05 but did not beat the full MLP. We adopt the MLP decoder (128, 1 layer). Post-hoc residual correction with XGBoost (one regressor per series, lagged in-sample residuals, applied recursively at test time) degraded performance (MSE $\approx$1.0 $\to$ 2.5--3.5 on ETTh1); we conclude that naive post-hoc residual modeling is not effective.

\subsection{Hyperparameters and architecture}

\paragraph{Protocol and optimization.}
Aligning evaluation with benchmarks (rolling stride equal to horizon, max origins 10) improved MSE. Learning rate 0.0004--0.0006 and Laplace innovations were chosen; $\beta_{\mathrm{KL}}$ and weight decay had small impact in the tested ranges. Stride 25 for training; stride 50 improved validation MSE in regime runs. Dropout 0.08 was slightly better than 0.10. Layer normalization and a uniform encoder/decoder (2 layers, 128 hidden dim) are retained. OLS initialization for AR coefficients (from PCA factors) gave a small improvement; single-window PCA for $f_0$ remains default. Regime temperature $\tau=0.2$ (num\_regimes=7) and factor order 2--3 (dataset-specific) complete the main configuration.

\paragraph{Architecture variants.}
Decrease-increase (bottleneck) patterns and asymmetric decoder widths regressed. Swish and LReLU regressed versus ReLU; residual connections in encoder/decoder regressed. Prior network capacity (hidden dim 80--96) showed no gain over default. Higher factor order (4 vs.\ 3) improved on an intermediate baseline; final choices are factor order 2 for ETTh1/ETTh2 and 3 for ETTm1/ETTm2.

\subsection{Dataset-specific settings}

Per-dataset settings are fixed in config for reproducibility (no Optuna at evaluation time). ETTh1/ETTh2: factor\_order=2, decoder\_var$\approx$0.02--0.06, window 96--1100. ETTm1: factor\_order=3, window 2000, decoder\_var=0.03, max\_epochs=60. ETTm2 and Weather use the shared default (factor order 3, window 96). Scripts pass these overrides explicitly; see config and run scripts for exact values.

\end{document}